\documentclass[10pt,a4paper]{article}

\usepackage[T1]{fontenc}
\usepackage[utf8]{inputenc}
\usepackage[a4paper,top=18mm,bottom=20mm,left=17mm,right=17mm,columnsep=6mm]{geometry}
\usepackage{lmodern}
\usepackage{microtype}
\usepackage{graphicx}
\usepackage{amsmath}
\usepackage{amssymb}
\usepackage{amsthm}
\usepackage{mathtools}
\usepackage{bm}
\usepackage{xcolor}
\usepackage{tikz}
\usetikzlibrary{arrows.meta,shapes.geometric}
\usepackage{float}
\usepackage{tabularx}
\usepackage{authblk}
\usepackage{orcidlink}
\usepackage{bbding}
\usepackage[font=small,labelfont=bf]{caption}
\usepackage{hyperref}
\hypersetup{colorlinks=true,linkcolor=black,citecolor=black,urlcolor=blue}
\usepackage[nameinlink,noabbrev]{cleveref}

\newtheorem{proposition}{Proposition}
\newtheorem{assumption}{Assumption}
\newtheorem{definition}{Definition}
\theoremstyle{remark}
\newtheorem{remark}{Remark}

\newfloat{algorithm}{t}{loa}
\floatname{algorithm}{Algorithm}
\crefname{algorithm}{Algorithm}{Algorithms}
\Crefname{algorithm}{Algorithm}{Algorithms}
\crefname{assumption}{Assumption}{Assumptions}
\Crefname{assumption}{Assumption}{Assumptions}

\newcommand{\citep}[1]{\cite{#1}}

\newcommand{\E}{\mathbb{E}}

\newcommand{\keywords}[1]{%
  \par\medskip\noindent\textbf{Keywords: }%
  {\def\and{\unskip\;\textperiodcentered\;}#1}%
}

\title{\bfseries Friction-Augmented Drifting Models for Resource-Efficient Domain Translation}
\author[1]{\mbox{Arkadii Kazanskii\,\orcidlink{0009-0004-8941-8565}}}
\author[1]{\mbox{Tatiana Petrova\,\orcidlink{0000-0001-5569-145X}\textsuperscript{\scriptsize(\Envelope)}}}
\author[2,3,4]{\mbox{Andrey Ustyuzhanin\,\orcidlink{0000-0001-7865-2357}}}
\author[1]{\mbox{Konstantin Bagrianskii\,\orcidlink{0009-0002-5989-163X}}}
\author[1]{\mbox{Aleksandr Puzikov\,\orcidlink{0009-0005-2598-0927}}}
\author[1]{\mbox{Radu State\,\orcidlink{0000-0002-4751-9577}}}
\affil[1]{SnT, University of Luxembourg, Luxembourg, Luxembourg\\
\texttt{tatiana.petrova@uni.lu}\\
\texttt{\{arkadii.kazanskii,aleksandr.puzikov,radu.state\}@uni.lu}\\
\texttt{konstantin.bagrianskii.001@student.uni.lu}}
\affil[2]{Constructor University, Bremen, Germany}
\affil[3]{Constructor Knowledge Labs, Bremen, Germany}
\affil[4]{Institute for Functional Intelligent Materials, National University of Singapore, Singapore, Singapore\\
\texttt{Andrey.Ustyuzhanin@constructor.org}}
\date{}

\begin{document}

\twocolumn[{%
\begin{@twocolumnfalse}
\maketitle
\vspace{-1.2em}
\begin{abstract}
Single-step generators promise high-fidelity synthesis at a fraction of the
inference and training cost of ordinary differential equation (ODE)-based flow
models, a central concern when
compute is limited. Drifting Models (DMs) train a one-step generator by
evolving samples under a kernel-based drift field, avoiding ODE integration
entirely, but a two-particle surrogate of their iteration admits a
\emph{locally repulsive} regime in which repulsion can dominate the
attraction to the target. We introduce DMF (\emph{Drifting Model with
Friction}), which scales the drift field by a linearly-scheduled coefficient
$1-\gamma(i)$. A closed-form analysis of the surrogate gives a per-step
contraction threshold and a finite-horizon bound on the error trajectory,
suggesting why friction can halt the iteration before it relaxes to a spurious
force-balance fixed point. On FFHQ latent-space domain translation, DMF
significantly improves on the frictionless DM it extends in both Fr\'echet
Inception Distance (FID; paired $p=0.019$, Cohen's $d=1.71$) and CLIP-MMD
(CMMD; $p=0.005$, $d=2.55$) with no additional forward passes or parameters,
and on a 2D task it sharply improves DM's Fr\'echet (moment-matching) error
while remaining on par under the 2-Wasserstein distance. DMF also
achieves FID and CMMD comparable to the far more expensive Optimal
Flow Matching (OFM) in our runs, at roughly $29\times$ lower training wall-clock
on identical hardware. DMF thus delivers these gains with a single scheduled
scalar.

\keywords{Generative models \and Single-step generators \and Domain
translation \and Drifting models \and Training stability \and Optimal transport.}
\end{abstract}
\vspace{0.8em}
\end{@twocolumnfalse}
}]

\section{Introduction}
\label{sec:intro}

Generative modelling under limited compute increasingly favours generators
that synthesise in a single forward pass. The move from diffusion to flow
matching was followed by methods targeting straight trajectories and
single-step generation, including Rectified Flow (RF)~\citep{liu2023flow},
optimal-transport conditional flow matching
(OT-CFM)~\citep{tong2024improving}, and Optimal Flow Matching
(OFM)~\citep{kornilov2024optimal}. OFM recovers provably straight,
transport-optimal trajectories, but at the cost of a strongly convex inner
optimisation per training step, a substantial compute burden.

Drifting Models (DMs)~\citep{deng2026drifting} dispense with ordinary
differential equation (ODE) integration altogether: a generator $f_\theta$ is trained by pushing its samples under a
kernel-based drift field $V_{p,q}=V_p^+-V_q^-$, where $V_p^+$ attracts
samples to data and $V_q^-$ pushes them apart within the current batch. DMs
attain single-step quality competitive with OFM at a fraction of the training
cost. Yet a two-particle
surrogate of the iteration admits a \emph{locally repulsive} regime: at small
inter-sample distance the repulsion can dominate the attraction to the
target, and the original analysis does not rule this out.

We introduce \emph{DMF}, a variant of DM in which the drift is scaled by
$1-\gamma(i)$ for a monotone schedule $\gamma(i)\in[0,1]$ with $\gamma(0)=0$
and $\gamma(T-1)=1$. A closed-form analysis of the unnormalised surrogate (a
deliberate simplification of the normalised field~\eqref{eq:drift}) yields a
per-step contraction threshold and a finite-horizon bound on the error
trajectory (\cref{prop:cumulative-bound}); both are derived at the surrogate
level and motivate the friction schedule directly. Empirically, on FFHQ
adult~$\to$~child translation DMF significantly improves DM's Fr\'echet
Inception Distance (FID; $p=0.019$, Cohen's $d=1.71$) and CLIP-MMD
(CMMD; $p=0.005$, $d=2.55$), and it also achieves FID and CMMD
comparable to the much costlier OFM in our runs, for far less training
compute.

\subsubsection{Efficiency and generalisation under limited resources.}
Two aspects of DMF suit the limited-resource, open-world setting. For
efficiency, it inherits the ODE-free, single-pass sampling of Drifting Models,
so the quality gains above add no inference cost and leave training cost
essentially unchanged. For generalisation, it targets domain-level transfer in
latent-space translation, where attribute-conditioned target domains often have
little paired supervision; rather than trimming inference steps after training,
as post-hoc acceleration does, friction acts during training to damp the
attraction--repulsion failure mode of the drift itself.

\subsubsection{Contributions.}
(i) \emph{DMF}, a friction-augmented variant of DM obtained by scaling the
drift by $1-\gamma(i)$ under a monotone schedule, motivated by a closed-form
per-step contraction threshold and finite-horizon bound for the unnormalised
two-particle surrogate (\cref{sec:revisit} and \cref{sec:dissipation}).
(ii) An empirical study on FFHQ domain translation showing that DMF
significantly improves DM's FID ($p=0.019$, Cohen's $d=1.71$) and CMMD
($p=0.005$, $d=2.55$) at comparable
training cost, and achieves FID and CMMD close to the much more expensive
OFM in our runs while training $\sim$29$\times$ faster on
identical hardware (\cref{sec:experiments}). Our implementation is publicly available.\footnote{\tiny\url{https://github.com/BagrKonstantin/Friction-Augmented-Drifting-Models}}

\section{Related Work}
\label{sec:related}

\subsubsection{Straight-trajectory generators.}
Flow matching~\citep{albergo2023stochastic,lipman2023flow} learns a
time-dependent velocity field transporting a source to the data distribution;
the learned trajectories are typically curved, so inference needs multi-step
ODE integration. Rectified Flow~\citep{liu2023flow} straightens trajectories
by iterative reflow; OT-CFM~\citep{tong2024improving} conditions the target
velocity on optimal-transport couplings. Both retain an ODE formulation and
thus require numerical integration at inference or an auxiliary training pass.

\subsubsection{Optimal-transport parametrisations.}
A related line couples generative modelling with
optimal transport (OT)~\citep{korotin2021neural,korotin2023neural}. OFM~\citep{kornilov2024optimal},
our principal baseline, parametrises the velocity as the gradient of an Input
Convex Neural Network~\citep{amos2017input,makkuva2020optimal} so the
flow-matching minimiser coincides with the Brenier map; a strongly convex
inner optimisation is solved per sample at each step.

\subsubsection{Drifting Models and dissipation.}
Deng et al.~\cite{deng2026drifting} proposed DMs, training a one-shot generator under
$V_{p,q}=V_p^+-V_q^-$ with kernel-weighted attraction and repulsion. Their
analysis establishes $q=p\Rightarrow V_{p,q}\equiv 0$ but leaves both the
converse (identifiability) and the dynamic stability of training open.
Concurrent work~\citep{he2026sinkhorn} addresses the converse: replacing the
one-sided kernel normalisation with a two-sided (doubly-stochastic) Sinkhorn
coupling makes the drift inherit the definiteness of the Sinkhorn divergence,
so it vanishes only at $p=q$. That remedy is a \emph{structural} reweighting of
the coupling and stays a memoryless first-order iteration; it is orthogonal
to (and in principle composable with) our mechanism, which leaves the
coupling untouched and instead applies a \emph{temporal} friction schedule
$V_\gamma=(1-\gamma)V$ to the drift dynamics (\cref{prop:cumulative-bound}). We
thus target the dynamic-stability gap rather than identifiability, and evaluate
on data-to-data translation rather than class-conditional generation from
noise. Dissipation
terms enter Langevin samplers~\citep{welling2011bayesian} and score-based
stochastic differential equation (SDE) models~\citep{song2021scorebased}
intrinsically; our use of dissipation is
analogous in spirit but acts on the ODE-free drift-field iteration. For FFHQ
translation we operate in the $W$ latent space of the Adversarial Latent
Autoencoder (ALAE)~\citep{pidhorskyi2020adversarial},
following~Kornilov et al.~\cite{kornilov2024optimal}.

\section{Methodology}
\label{sec:methodology}

\subsection{Revisiting the Drifting-Field Dynamics}
\label{sec:revisit}

Let $p$ be the target data distribution and $q$ the distribution of the
current generator. The training drift field is
\begin{equation}
  V_{p,q}(x) = V_p^+(x) - V_q^-(x),
  \label{eq:drift}
\end{equation}
with attraction $V_p^+$ pulling samples towards $p$ and repulsion $V_q^-$
pushing them apart in the current batch:
\begin{align}
  V_p^+(x) &= \tfrac{1}{Z_p}\,\E_{y^+\sim p}\bigl[k(x,y^+)\,(y^+-x)\bigr], \label{eq:vplus}\\
  V_q^-(x) &= \tfrac{1}{Z_q}\,\E_{y^-\sim q}\bigl[k(x,y^-)\,(y^--x)\bigr], \label{eq:vminus}
\end{align}
with $Z_p(x)=\E_{y^+}[k(x,y^+)]$, $Z_q(x)=\E_{y^-}[k(x,y^-)]$ and the Laplace
kernel $k(x,y)=\exp(-\lVert x-y\rVert/\tau)$, $\tau>0$.
\Cref{fig:friction_main} illustrates this attraction--repulsion drift and the
friction scaling $V_\gamma=(1-\gamma)V$ that defines DMF.

\begin{figure}[t!]
  \centering
  \resizebox{0.85\linewidth}{!}{%
  \begin{tikzpicture}

  \definecolor{posbg}{RGB}{210,220,240}
  \definecolor{negbg}{RGB}{245,215,185}
  \definecolor{posdot}{RGB}{80,120,200}
  \definecolor{negdot}{RGB}{230,140,50}
  \definecolor{arrowcol}{RGB}{20,20,20}
  \definecolor{friccolor}{RGB}{0,115,115}

  \begin{scope}
    \clip (-2.9,-2.9) rectangle (2.9,2.9);

    \foreach \s/\op in {2.8/0.10, 2.2/0.14, 1.6/0.18, 1.1/0.22, 0.65/0.26}{
      \fill[negbg,opacity=\op] (-2.4,-1.6) ellipse ({\s*1.5} and {\s*1.2});
    }
    \foreach \s/\op in {2.5/0.10, 1.9/0.14, 1.3/0.18, 0.85/0.22, 0.5/0.26}{
      \fill[posbg,opacity=\op] (2.8, 1.0) ellipse ({\s*1.1} and {\s*1.0});
      \fill[posbg,opacity=\op] (2.2,-0.9) ellipse ({\s*0.9} and {\s*0.85});
    }
    \foreach \x/\y in {
      -1.10/ 0.55, -0.80/ 0.80, -0.45/ 0.92, -0.15/ 0.70,  0.10/ 0.50,
      -1.30/ 0.20, -1.00/ 0.35, -0.60/ 0.45,  0.00/ 0.30, -0.25/ 0.10,
      -1.50/-0.10, -1.15/-0.05, -0.80/ 0.00, -0.40/-0.05,  0.15/-0.10,
      -1.40/-0.40, -1.10/-0.30, -0.70/-0.20, -0.30/-0.30,  0.05/-0.35,
      -1.20/-0.65, -0.90/-0.55, -0.10/-0.55,  0.20/-0.60,
      -0.60/-0.85, -0.25/-0.80,  0.05/-0.80, -0.90/-0.85, -0.40/-1.00,
      -0.70/ 0.65, -0.20/ 1.00,  0.30/ 0.85,  0.35/ 0.20,  0.25/-0.20,
      -1.60/ 0.45, -1.70/-0.25, -1.35/-0.70, -0.05/ 1.10,  0.40/-0.50%
    }{ \fill[negdot] (\x,\y) circle (1.6pt); }
    \foreach \x/\y in {
       0.55/ 0.90,  0.85/ 1.05,  1.15/ 0.85,  1.40/ 0.65,  1.60/ 0.45,
       0.30/ 0.60,  0.65/ 0.65,  1.00/ 0.55,  1.30/ 0.40,  1.55/ 0.15,
       0.10/ 0.30,  0.45/ 0.35,  0.80/ 0.25,  1.10/ 0.15,  1.40/-0.10,
       0.20/ 0.00,  0.55/ 0.05,  0.90/-0.05,  1.20/-0.15,  1.50/-0.35,
       0.05/-0.30,  0.40/-0.25,  0.75/-0.35,  1.05/-0.45,  1.30/-0.60,
       0.20/-0.60,  0.55/-0.55,  0.85/-0.65,  1.10/-0.75,  0.30/-0.85,
       0.65/ 1.20,  0.95/ 1.25,  1.20/ 1.10,  1.50/ 0.90,  1.70/ 0.70,
      -0.05/ 0.80, -0.10/ 0.50,  0.00/-0.05,  0.60/-0.90,  1.70/-0.15%
    }{ \fill[posdot] (\x,\y) circle (1.6pt); }

    \fill[black] (0,0) circle (3.2pt);
    \node[above right, font=\small] at (-0.38,0.08) {$\mathbf{x}$};

    \draw[-{Latex[length=8pt,width=5pt]}, line width=1.5pt, arrowcol, dashed]
      (0,0) -- (-0.55,-0.95) node[below, font=\small] {$\mathbf{V}^-_q$};
    \draw[-{Latex[length=8pt,width=5pt]}, line width=1.5pt, arrowcol]
      (0,0) -- (0.94,-0.34) node[right, font=\small] {$\mathbf{V}^+_p$};
    \draw[-{Latex[length=8pt,width=5pt]}, line width=1.5pt, arrowcol]
      (0,0) -- (1.0, 0.85) node[right, font=\small\bfseries] {$\mathbf{V}$};
    \draw[-{Latex[length=8pt,width=5pt]}, line width=2.5pt, friccolor]
      (0,0) -- (-0.562, -0.416);
    \node[left, friccolor] at (-0.5, -0.45)
      {$\mathbf{V}^f\!=-\!\gamma\mathbf{V}$};
  \end{scope}

  \def\ya{-3.22}
  \def\yb{-3.70}

  \fill[posdot] (-2.8,\ya) circle (3pt);
  \node[right, font=\footnotesize] at (-2.7,\ya) {Data $\mathbf{y}^+\!\sim p$};

  \fill[negdot] (-0.6,\ya) circle (3pt);
  \node[right, font=\footnotesize] at (-0.5,\ya) {Generated samples $\mathbf{y}^-\!\sim q$};

  \draw[-{Latex[length=5pt,width=3.5pt]}, line width=1.3pt, friccolor]
    (-0.87,\yb) -- (-0.55,\yb);
  \node[right, font=\footnotesize, friccolor] at (-0.5,\yb) {Friction $\mathbf{V}^f$};

  \end{tikzpicture}}
  \caption{\textbf{Illustration of drifting a sample with DMF}. In the
  original definition of DM, a generated sample drifts according to a
  vector $V=V^+_p-V^-_q$, where $V^+_p$ -- the attraction term -- moves
  generated samples (orange) to the data (blue) and $V^-_q$ -- the
  repulsion term -- pushes the samples apart. In DMF we introduce a
  friction term $V^f=-\gamma V$, where $\gamma$ is a friction coefficient.
  Drift in DMF is thus defined as $V_\gamma=(1-\gamma)V$.}
  \label{fig:friction_main}
\end{figure}

\subsubsection{A local unnormalised surrogate.}
To expose the attraction--repulsion trade-off tractably we analyse a local
\emph{unnormalised surrogate} of~\eqref{eq:drift}, retaining the
kernel-weighted numerators of~\eqref{eq:vplus}--\eqref{eq:vminus} and dropping
$Z_p,Z_q$. The surrogate does not capture the $Z_p/Z_q$ dependence of the
normalised dynamics, but it does capture the local competition that drives the
instability. Localise to one target $y^+=0$ and two generated particles
$x_1^i=+a$, $x_2^i=-a$ within bandwidth of $y^+$. The surrogate drift on $x_1$
sums the raw attraction and repulsion contributions,
\begin{equation}
  \begin{aligned}
  \widehat V(x_1) &= k(x_1^i,y^+)(y^+-x_1^i)\\
  &\quad - k(x_1^i,x_2^i)(x_2^i-x_1^i) \\
  &= -k_t a + 2k_d a,
  \end{aligned}
  \label{eq:two-particle}
\end{equation}
with $k_t=e^{-a/\tau}$ (kernel to target) and $k_d=e^{-2a/\tau}=k_t^2$
(kernel between particles). The update $x_1^{i+1}=x_1^i+\widehat V(x_1^i)=a(1-k_t+2k_d)$
contracts towards $y^+$ precisely when
\begin{equation}
  k_d < \tfrac{k_t}{2}\ \Longleftrightarrow\ k_t < \tfrac12 \ \Longleftrightarrow\ a > \tau\ln 2.
  \label{eq:stability-no-friction}
\end{equation}
For $a<\tau\ln 2$ the surrogate repulsion overpowers the attraction and the
error $\varepsilon^i=x_1^i-y^+$ obeys
$\varepsilon^{i+1}=\lambda(a^i)\varepsilon^i$ with
$\lambda(a)=1-k_t(a)+2k_d(a)>1$. The nonlinear iteration has a non-zero fixed
point at $a=\tau\ln 2$ to which trajectories on both sides asymptote.
\Cref{fig:dynamics} gives the geometric intuition behind this behaviour and the
effect of the friction schedule, motivating \cref{rem:fixed-point} and
\cref{prop:cumulative-bound}.
We call $a<\tau\ln 2$ \emph{locally repulsive} and
reserve ``divergent'' for the linearised behaviour near the target.

\begin{figure*}[t]
  \centering
  \includegraphics[width=\textwidth]{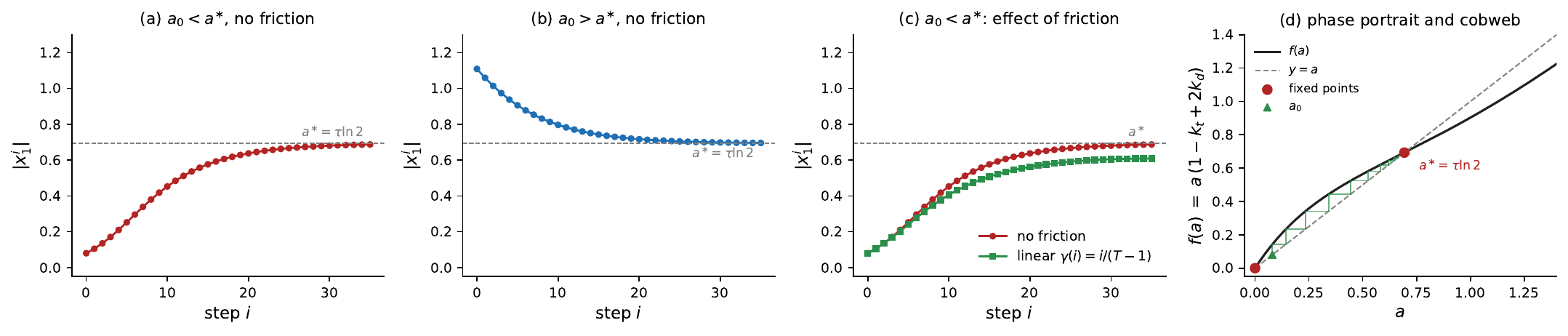}
  \caption{Two-particle surrogate dynamics ($\tau=1$; $a$ is a generated
  particle's distance to the target, so the error magnitude equals $a$).
  (a)~Locally repulsive regime ($a_0<a^{\ast}$) without friction: the error grows
  towards the stable fixed point $a^{\ast}=\tau\ln 2$.
  (b)~Contracting regime ($a_0>a^{\ast}$): the error decays towards $a^{\ast}$.
  (c)~Locally repulsive $a_0$ with/without the linear schedule $\gamma(i)=i/(T-1)$:
  friction halts the trajectory below $a^{\ast}$ as $\gamma\to 1$.
  (d)~Phase portrait of $f(a)=a(1-k_t+2k_d)$ and $y=a$; the two intersections
  are the fixed points of~\cref{rem:fixed-point}.}
  \label{fig:dynamics}
\end{figure*}

\begin{remark}[Fixed-point structure of the surrogate]
\label{rem:fixed-point}
The scalar iteration $a^{i+1}=f(a^i)$, $f(a)=a(1-k_t(a)+2k_d(a))$, has two
fixed points. The trivial $a=0$ is locally expansive ($f'(0)=2$). The
non-trivial $a^{\ast}=\tau\ln 2$ (where $k_t=2k_d$) is locally contracting,
$f'(a^{\ast})=1-\tfrac{\ln 2}{2}\approx 0.653$. From
$f(a)-a=a\,e^{-a/\tau}(2e^{-a/\tau}-1)$, the sign is positive on
$(0,a^{\ast})$ and negative on $(a^{\ast},\infty)$, and $f$ maps each interval
into itself; hence every trajectory from $a_0>0$ is monotone and converges to
$a^{\ast}$, crossing neither $y^+=0$ nor $a^{\ast}$.
\end{remark}

\subsection{Friction as an Error-Control Mechanism}
\label{sec:dissipation}

The locally repulsive regime is analogous to an underdamped oscillator, in
which a restoring force alone is insufficient and a dissipative term is
required. We introduce a scalar friction coefficient $\gamma\in[0,1]$ and
apply it to the DM drift,
\begin{equation}
  V_\gamma(x) = (1-\gamma)\,V(x),
  \label{eq:friction-field}
\end{equation}
where $V$ is the full normalised field of~\eqref{eq:drift}. In the surrogate
the same scaling yields
$x_1^{i+1}=a\,\lambda_\gamma(a)$ with
$\lambda_\gamma(a)=1-(1-\gamma)(k_t(a)-2k_d(a))$. Let
$\tilde\eta(a)\coloneqq 2k_d(a)-k_t(a)$ be the \emph{signed instability
margin}: $\tilde\eta>0$ in the locally repulsive regime and $\le 0$
otherwise. Then $\varepsilon^{i+1}=(1+(1-\gamma)\tilde\eta(a^i))\varepsilon^i$,
so at $\gamma=\tfrac12$ the repulsive growth rate is halved relative to the
frictionless case.

\begin{assumption}[Uniform bound on the positive margin]
\label{ass:eta-bounded}
There exists $\eta_{\max}\ge 0$ with $\eta^+_i\coloneqq\max(\tilde\eta^i,0)\le\eta_{\max}$
for every $i\in\{0,\dots,T-1\}$. In the surrogate $k_d=k_t^2$, $k_t\in(0,1]$,
so $\tilde\eta(a)=2k_t^2-k_t\le 1$ and $\eta_{\max}\le 1$ unconditionally.
\end{assumption}

\begin{proposition}[Finite-horizon cumulative bound]
\label{prop:cumulative-bound}
Let $T\ge 2$ and $\gamma:\{0,\dots,T-1\}\to[0,1]$ be non-decreasing with
$\gamma(0)=0$, $\gamma(T-1)=1$. Suppose \cref{ass:eta-bounded} holds and the
multiplier is non-negative, $1+(1-\gamma(i))\tilde\eta^i\ge 0$ for every $i$
(automatic in the two-particle surrogate, where
$\tilde\eta^i\in[-1/8,1]$). Then the error
$\varepsilon^i=x_1^i-y^+$ of the friction update satisfies
\begin{equation}
  \begin{aligned}
  \lvert\varepsilon^{T}\rvert
  &\le \lvert\varepsilon^{0}\rvert\!\prod_{i=0}^{T-1}\!\bigl(1+(1-\gamma(i))\eta_{\max}\bigr) \\
  &\le \lvert\varepsilon^{0}\rvert\,
  e^{\eta_{\max}\sum_{i=0}^{T-1}(1-\gamma(i))}.
  \end{aligned}
  \label{eq:bounded-trajectory}
\end{equation}
For the linear schedule $\gamma(i)=i/(T-1)$, $\sum_i(1-\gamma(i))=T/2$ and the
bound specialises to
$\lvert\varepsilon^{T}\rvert\le\lvert\varepsilon^{0}\rvert\,e^{\eta_{\max}T/2}$.
\end{proposition}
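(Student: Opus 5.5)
The plan is a one-step multiplicative bound, unrolled over $T$ steps and then converted to exponential form. The starting point is the friction update from \cref{sec:dissipation}:
\[
\varepsilon^{i+1}=\bigl(1+(1-\gamma(i))\tilde\eta^i\bigr)\varepsilon^i .
\]
Taking absolute values gives $\lvert\varepsilon^{i+1}\rvert=\lvert 1+(1-\gamma(i))\tilde\eta^i\rvert\,\lvert\varepsilon^i\rvert$. The non-negativity hypothesis on the multiplier lets us drop the outer absolute value. Since $1-\gamma(i)\ge 0$ and $\tilde\eta^i\le\eta^+_i\le\eta_{\max}$ by \cref{ass:eta-bounded}, we get the per-step estimate
\[
0\le 1+(1-\gamma(i))\tilde\eta^i\le 1+(1-\gamma(i))\eta_{\max}.
\]
The lower bound matters: without it, a large negative $\tilde\eta^i$ could produce a multiplier below $-1$ and break the bound.

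Next, induct on $i$. Because every factor is non-negative, the inequalities multiply, giving
\[
\lvert\varepsilon^T\rvert\le\lvert\varepsilon^0\rvert\prod_{i=0}^{T-1}\bigl(1+(1-\gamma(i))\eta_{\max}\bigr).
\]
Applying $1+u\le e^u$ to each factor (valid for all real $u$, and here $u\ge0$) turns the product into $\exp\bigl(\eta_{\max}\sum_i(1-\gamma(i))\bigr)$, which is the second line of \eqref{eq:bounded-trajectory}. Monotonicity of $\gamma$ and the endpoint conditions are not needed for the inequality itself; they only shape the sum.

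For the linear schedule,
\[
\sum_{i=0}^{T-1}\Bigl(1-\tfrac{i}{T-1}\Bigr)=T-\tfrac{1}{T-1}\cdot\tfrac{(T-1)T}{2}=\tfrac{T}{2},
\]
which uses $T\ge 2$ so the schedule is defined. This yields $\lvert\varepsilon^T\rvert\le\lvert\varepsilon^0\rvert e^{\eta_{\max}T/2}$.

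The main obstacle is justifying the parenthetical claim that the hypotheses hold automatically in the two-particle surrogate.
\begin{itemize}
\item In the surrogate, $\tilde\eta=2k_t^2-k_t$ with $k_t=e^{-a/\tau}\in(0,1]$, provided $a\ge0$. By \cref{rem:fixed-point}, trajectories starting from $a_0>0$ stay positive (and the friction update only shrinks the step, keeping $a$ between its current value and $a^\ast$ or $0$), so this holds along the whole trajectory.
\item Minimising the quadratic $2k^2-k$ on $(0,1]$ gives a minimum of $-1/8$ at $k=1/4$, and the maximum is $1$ at $k=1$. Hence $\tilde\eta^i\in[-1/8,1]$.
\item The multiplier is then at least $1-\tfrac18(1-\gamma)\ge\tfrac78>0$, which gives the non-negativity hypothesis.
\end{itemize}
The upper bound of $1$ recovers $\eta_{\max}\le1$, as stated in \cref{ass:eta-bounded}.
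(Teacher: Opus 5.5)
Your proposal is correct and follows the paper's proof step for step. You drop the absolute value on the multiplier via the non-negativity hypothesis, bound $\tilde\eta^i\le\eta^+_i\le\eta_{\max}$ using $1-\gamma(i)\ge 0$, telescope, convert with $1+u\le e^u$ (the paper's $\ln(1+x)\le x$), and evaluate the linear-schedule sum. Your extra check that $\tilde\eta=2k_t^2-k_t\in[-1/8,1]$ along the surrogate trajectory, so the multiplier is at least $7/8$, correctly proves the parenthetical claim that the paper only asserts.
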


\begin{proof}
The update reads
$\varepsilon^{i+1}=(1+(1-\gamma(i))\tilde\eta^i)\varepsilon^i$. The positivity
hypothesis gives
$\lvert\varepsilon^{i+1}\rvert\le(1+(1-\gamma(i))\eta^+_i)\lvert\varepsilon^i\rvert
\le(1+(1-\gamma(i))\eta_{\max})\lvert\varepsilon^i\rvert$, using
$\tilde\eta^i\le\eta^+_i$, $(1-\gamma(i))\ge 0$ and \cref{ass:eta-bounded}.
Telescoping yields the first bound; the second follows from $\ln(1+x)\le x$
for $x\ge0$. The linear-schedule case is direct arithmetic.
\end{proof}

\begin{remark}[What the bound is]
\label{rem:finite-horizon}
\Cref{prop:cumulative-bound} is a \emph{finite-horizon} bound: it controls
$\lvert\varepsilon^{T}\rvert$ at a fixed horizon $T$ and grows exponentially
in $T$, so it is not a stability-as-$T\to\infty$ statement. It does not expand
the per-step contraction region~\eqref{eq:stability-no-friction}; it bounds
cumulative divergence and forces the per-step update to zero as $\gamma\to1$.
The term \emph{friction} is used by analogy with a damped oscillator; the
contribution is its coupling to the instability margin, not the annealing
itself.
\end{remark}

\subsection{Practical Implementation}
\label{sec:implementation}

DMF is a minimal, drop-in modification of the DM training loop
(\cref{alg:dmf}): the only change relative to vanilla DM is scaling the drift
field $V$ by $(1-\gamma(i))$ before the drifted target is formed (line~5).
Setting $\gamma(i)=0$ for all $i$ recovers DM exactly. It adds a single scalar
multiplication, with no extra parameters, architectural changes, or forward
passes.

\begin{algorithm}[t]
\caption{DMF training step (one iteration $i$); line~5 is the only addition
over DM, recovered by setting $\gamma\equiv0$.}
\label{alg:dmf}
\footnotesize
\hrule\vspace{2pt}
\noindent\textbf{Require:} generator $f_\theta$; friction schedule
$\gamma(i)\in[0,1]$; data batch $y^+\sim p$; iteration $i$.
\vspace{2pt}\hrule\vspace{3pt}
\begin{flushleft}
\setlength{\tabcolsep}{4pt}
\begin{tabular}{@{}r@{\quad}l@{}}
1: & $\varepsilon \sim \mathcal{N}(\bm 0,\bm I)$ \quad\textit{// sample noise}\\
2: & $x \leftarrow f_\theta(\varepsilon)$ \quad\textit{// generated samples}\\
3: & $y^- \leftarrow x$ \quad\textit{// reuse batch as negatives}\\
4: & $V \leftarrow V_{p,q}(x;\,y^+,y^-)$ \quad\textit{// drift, \eqref{eq:drift}--\eqref{eq:vminus}}\\
5: & $V \leftarrow (1-\gamma(i))\,V$ \quad\textbf{// DMF friction}\ \textit{(omit for DM)}\\
6: & $x_{\mathrm{drift}} \leftarrow \mathrm{stopgrad}(x + V)$\\
7: & $\mathcal{L} \leftarrow \mathrm{MSE}(x,\,x_{\mathrm{drift}})$\\
8: & update $\theta$ by descending $\nabla_\theta\mathcal{L}$\\
\end{tabular}
\end{flushleft}
\vspace{1pt}\hrule
\end{algorithm}

\begin{table}[t!]
  \centering
  \scriptsize
  \caption{Friction-schedule ablation on FFHQ adult~$\to$~child
  (mean\,$\pm$\,std over 5 seeds). Every friction schedule improves CMMD
  relative to the frictionless baseline ($\gamma\equiv0$) and matches or
  improves its FID (most do so numerically; the weakest, sine, matches within
  variance). The linear schedule we
  adopt as default lies within one standard deviation of the best
  (delayed-linear). This ablation uses a larger batch and more iterations than
  \cref{tab:ffhq}, so its absolute FIDs are not directly comparable; the
  ranking across schedules is what matters. Lower is better; best in bold.}
  \label{tab:schedule}
  \begin{tabularx}{\linewidth}{|X|l|l|}
    \hline
    Schedule $\gamma(i)$ & FID\,$\downarrow$ & CMMD\,$\downarrow$ \\
    \hline
    None ($\gamma\equiv0$)   & $10.91 \pm 0.39$ & $0.0118 \pm 0.0087$ \\
    \hline
    Delayed-linear           & $\mathbf{10.49 \pm 0.16}$ & $\mathbf{0.0063 \pm 0.0017}$ \\
    Quadratic                & $10.54 \pm 0.23$ & $0.0066 \pm 0.0023$ \\
    Linear (default)         & $10.58 \pm 0.27$ & $0.0070 \pm 0.0027$ \\
    Constant $0.50$          & $10.62 \pm 0.31$ & $0.0075 \pm 0.0031$ \\
    Square-root              & $10.68 \pm 0.19$ & $0.0079 \pm 0.0019$ \\
    Constant $0.75$          & $10.80 \pm 0.53$ & $0.0097 \pm 0.0052$ \\
    Constant $0.25$          & $10.81 \pm 0.35$ & $0.0093 \pm 0.0049$ \\
    Sine                     & $10.89 \pm 0.12$ & $0.0095 \pm 0.0010$ \\
    \hline
  \end{tabularx}
\end{table}

We use $\gamma$ as a hyperparameter increasing monotonically over training.
The two-particle analysis underestimates the actual instability (each sample
interacts with many repulsors, and the neural drift perturbs inter-particle
distances), so a schedule starting near $\gamma=0$ (exploratory shifts towards
the target) and reaching $\gamma=1$ at the end (damping residual divergence
via~\eqref{eq:bounded-trajectory}) is natural. We adopt the linear schedule
\begin{equation}
  \gamma(i)=\frac{i}{T-1},\qquad i\in\{0,\dots,T-1\},
  \label{eq:schedule}
\end{equation}
which satisfies $\gamma(0)=0$, $\gamma(T-1)=1$ as required by
\cref{prop:cumulative-bound}. \cref{tab:schedule} ablates eight
friction-schedule shapes against the frictionless baseline ($\gamma\equiv0$) on
FFHQ adult~$\to$~child over five seeds: \emph{every} friction schedule improves
on the no-friction baseline in CMMD and matches or improves it in FID, and no
alternative beats the linear schedule by more than one standard deviation. Since the linear schedule is the one directly motivated by \cref{prop:cumulative-bound} and sits within this top group, we keep it as default: the benefit of friction is robust to the precise schedule shape rather than reliant on a tuned choice.

\section{Experiments}
\label{sec:experiments}

\subsection{Toy Example}
\label{sec:toy}

We construct a 2D problem whose target is a Gaussian mixture
$p_1=\sum_{k=1}^{2}w_k\,\mathcal{N}(\bm{\mu}_k,\bm{\Sigma}_k)$ with
$\bm{\mu}_1=(-2,0)$, $\bm{\mu}_2=(2,0)$,
$\bm{\Sigma}_1=\bm{\Sigma}_2=\mathrm{diag}(1,1)$, $w_1=w_2=0.5$, and source
$p_0=\mathcal{N}(\bm{0},\bm{I})$. We report three metrics in \cref{tab:toy}
(all lower is better): the 2D Fr\'echet distance (FD), the exact 2-Wasserstein
distance ($W_2$), and the $W_2^2$-based L2 unexplained-variance percentage
(L2-UVP). DMF sharply reduces DM's
Fr\'echet (moment-matching) error---by roughly $14\times$ ($0.00017$ vs.\
$0.0024$)---yet is on par with DM under $W_2$ and L2-UVP (the small gaps are
not significant at $n=3$). Since
the target is a symmetric bimodal mixture, this indicates that friction here
improves the match of low-order moments and mode balance (which FD rewards)
rather than the gross transport cost. DMF nonetheless attains a better mean
than the costlier OFM on all three metrics. The generator is a residual MLP with two hidden layers of
width $1024$; the drift uses multi-scale kernel bandwidths
$R\in\{0.02,0.05,0.15\}$. FD is evaluated on $100{,}000$ generated samples;
$W_2$ and L2-UVP use $5{,}000$-sample subsets.

\begin{table*}[t]
  \centering
  \small
  \caption{2D Gaussian~$\to$~Gaussian-mixture toy task (mean\,$\pm$\,std over
  3 seeds; batch $1024$, $500$ iterations). FD is the Fr\'echet distance (the
  mean/covariance metric underlying FID), $W_2$ the exact 2-Wasserstein
  distance (via optimal transport), and L2-UVP the $W_2^2$-based
  unexplained-variance percentage. DM and DMF share an identical architecture,
  optimiser, and learning rate ($10^{-4}$); only the friction coefficient
  differs. DMF reduces the Fr\'echet (moment-matching) error by roughly
  $14\times$ over DM but is on par with DM under $W_2$ and L2-UVP, indicating
  that friction here improves low-order moment / mode-balance matching rather
  than the gross transport cost; DMF has the best mean on all three metrics.
  Lower is better; best FD in bold.}
  \label{tab:toy}
  \begin{tabular}{|l|c|c|c|}
    \hline
    Method & FD\,$\downarrow$ & $W_2$\,$\downarrow$ & L2-UVP\,\%\,$\downarrow$ \\
    \hline
    OFM~\citep{kornilov2024optimal}         & $0.0043 \pm 0.0027$ & $0.248 \pm 0.125$ & $1.29 \pm 1.21$ \\
    DM~\citep{deng2026drifting}             & $0.0024 \pm 0.0009$ & $0.188 \pm 0.021$ & $0.60 \pm 0.13$ \\
    DMF ($\gamma=0\!\to\!1$, \textbf{ours}) & $\mathbf{0.00017 \pm 0.00007}$ & $0.183 \pm 0.018$ & $0.56 \pm 0.11$ \\
    \hline
  \end{tabular}
\end{table*}

\subsection{Domain Translation on FFHQ}
\label{sec:ffhq}

Following~Kornilov et al.~\cite{kornilov2024optimal}, we map samples between
attribute-conditioned subsets of the $W$ latent space of
ALAE~\citep{pidhorskyi2020adversarial}, trained on
FFHQ~\citep{karras2019style} at $1024\!\times\!1024$. Quality is measured by
FID~\citep{heusel2017fid} and CMMD~\citep{jayasumana2024cmmd}; we report CMMD
because it is an unbiased, CLIP-based distance that is more reliable than FID at
the modest sample sizes of attribute-conditioned subsets, where FID's Gaussian
estimator is biased.

\subsubsection{Training setup.}
All methods operate on the same ALAE latent features with the same optimiser
(Adam, learning rate $5\times10^{-4}$) for $10{,}000$ iterations; DM and DMF
use batch size $32$, OFM uses batch size $128$, and OFM additionally solves a
strongly convex inner optimisation per sample. We report OFM at batch $128$,
the configuration at which it is most competitive in our runs; at the matched
batch $32$ it is markedly worse (FID $12.6$), so comparing against its
batch-$128$ setting is the conservative choice for our claim. All runs use a
single NVIDIA RTX~A6000 (CUDA~13.0, PyTorch~2.11.0+cu130). FID and CMMD are
computed on the held-out test split (the last $10{,}000$ of $70{,}000$ FFHQ
ALAE latents), restricted to the relevant attribute subset; significance is
assessed by paired $t$-tests across seeds, with Cohen's $d$ reported as
$d_z=t/\sqrt{n}$. Wall-clock training times on this
hardware are reported in \cref{tab:ffhq}. Beyond OFM, the strongest directly
matched baseline for the ALAE latent-space translation protocol of
Kornilov et al.~\cite{kornilov2024optimal}, we additionally compare against Rectified Flow
(RF), 2-Rectified Flow (2-RF), and OT-CFM trained on the same ALAE latents
(3 seeds each); DMF attains the best FID and CMMD among all baselines
(\cref{tab:ffhq}).

Results are in \cref{tab:ffhq}; qualitative examples for six subjects in
\cref{fig:ffhq}. Against the frictionless DM it extends, DMF significantly
improves both FID (paired $t$-test over five seeds, $p=0.019$, Cohen's
$d=1.71$) and CMMD ($p=0.005$, $d=2.55$), attaining the lowest values overall
on both metrics. This gain comes at training cost of the same
order as DM: algorithmically DMF adds only a per-step scalar multiplication and
no extra forward passes, and the measured DM--DMF wall-clock difference is small
in absolute terms and within run-to-run timing variability on these short runs.
DMF moreover obtains FID and CMMD close to the much more expensive OFM in our
runs while training $\sim$29$\times$ faster on the same hardware ($6.8$ vs.\ $197.5$\,min). At inference DM and DMF
produce a sample with a single forward pass of the generator and no numerical
integration, whereas RF, OT-CFM, and OFM sample by numerically integrating a
learned velocity field.
Qualitatively the DMF column preserves identity features better than DM. As a preliminary check beyond a single domain pair, on FFHQ
male~$\to$~female (3 seeds) DMF attains FID $6.94\pm0.01$ / CMMD
$0.0043\pm0.0007$: it shows no significant FID difference from DM ($6.98\pm0.15$,
paired $p=0.69$) and is significantly better than OFM ($7.62\pm0.03$,
$p<10^{-3}$). On CMMD DMF is numerically lowest (DM $0.0055\pm0.0012$, OFM
$0.0111\pm0.0036$), though at $n=3$ the gaps are not significant (vs.\ DM
$p=0.36$, vs.\ OFM $p=0.06$). The same FID ordering is observed on this second
pair, but the small number of seeds ($n=3$) makes this evidence preliminary.

\begin{figure*}[p]
  \centering
  \includegraphics[width=0.72\textwidth]{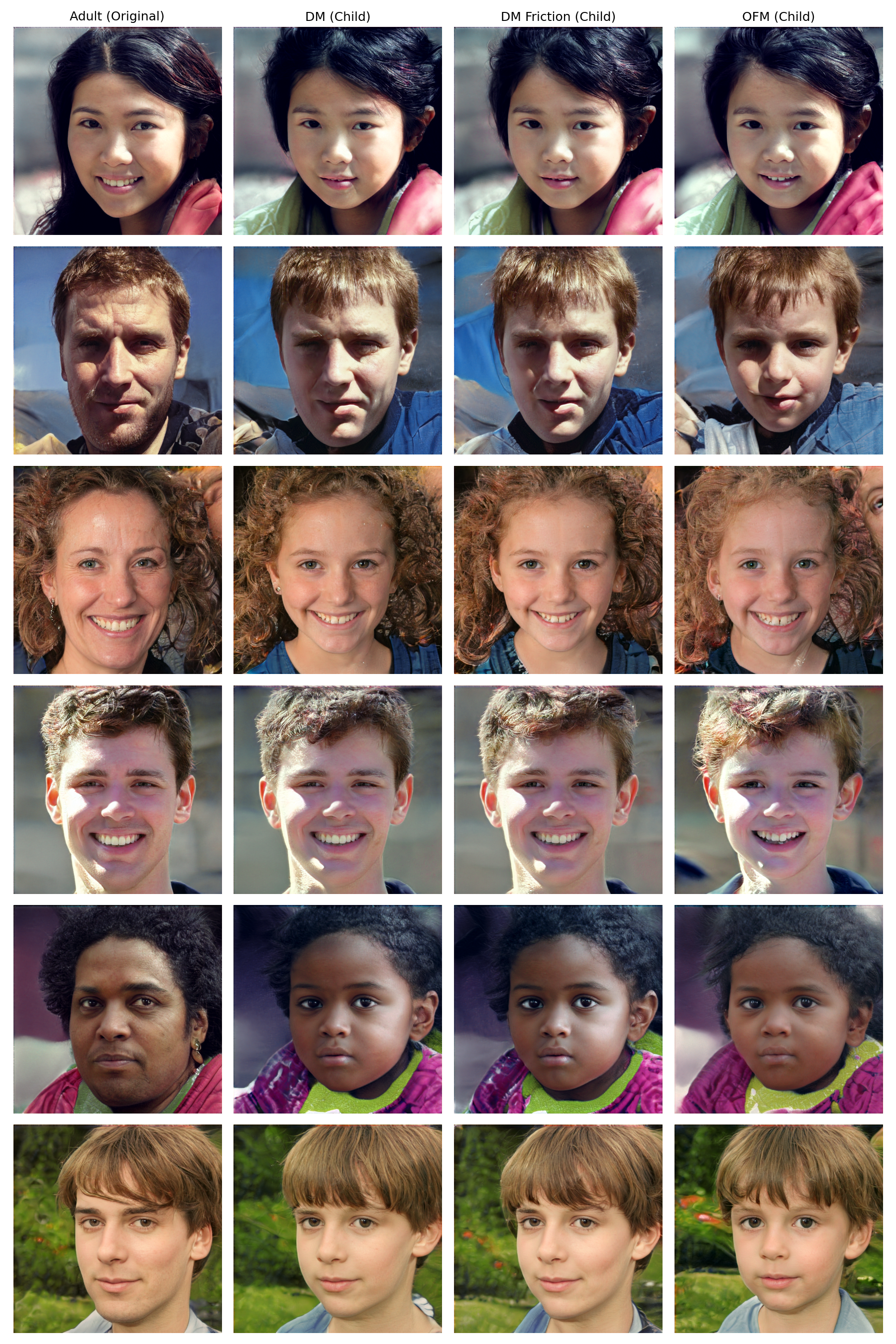}
  \caption{Qualitative FFHQ adult~$\to$~child translation; each row is one
  subject (six subjects). Left to right: original adult image, DM, DMF
  ($\gamma(i)=i/(T-1)$), OFM.}
  \label{fig:ffhq}
\end{figure*}

\begin{table*}[t]
  \centering
  \small
  \caption{FFHQ adult~$\to$~child translation (mean\,$\pm$\,std over 5 seeds;
  DM and DMF at batch $32$, OFM at its batch $128$, all $10^4$ iterations).
  Training time per run on a single NVIDIA RTX~A6000. DMF significantly
  improves on DM in both FID (paired $t$-test $p=0.019$, Cohen's $d=1.71$) and
  CMMD ($p=0.005$, $d=2.55$). We do not observe a significant DMF--OFM
  difference on FID ($p=0.13$) or CMMD ($p=0.10$) at $n=5$; this should not be
  read as an equivalence test. DMF trains $\sim$29$\times$ faster than OFM.
  RF, 2-RF, and OT-CFM are additional flow-matching baselines on the same ALAE
  latents (3 seeds); DMF attains the best FID and CMMD overall. Training time
  was not recorded for these three baselines (``--''). Best in bold.}
  \label{tab:ffhq}
  \begin{tabular}{|l|c|c|c|}
    \hline
    Method & FID\,$\downarrow$ & CMMD\,$\downarrow$ & Time, min \\
    \hline
    OFM~\citep{kornilov2024optimal}    & $10.572 \pm 0.299$ & $0.01175 \pm 0.00473$ & $197.5$ \\
    RF~\citep{liu2023flow}             & $15.214 \pm 1.898$ & $0.04423 \pm 0.01356$ & -- \\
    2-RF~\citep{liu2023flow}           & $16.375 \pm 1.961$ & $0.06060 \pm 0.01995$ & -- \\
    OT-CFM~\citep{tong2024improving}   & $10.810 \pm 0.450$ & $0.01450 \pm 0.00224$ & -- \\
    \hline
    DM~\citep{deng2026drifting}        & $11.195 \pm 0.439$ & $0.01547 \pm 0.00284$ & $6.9$ \\
    DMF (\textbf{ours})                & $\mathbf{10.413 \pm 0.130}$ & $\mathbf{0.00746 \pm 0.00219}$ & $6.8$ \\
    \hline
  \end{tabular}
\end{table*}

\subsubsection{Limited-data robustness.}
Generation from limited data is a common constraint for attribute-conditioned
translation, so we repeat the
adult~$\to$~child comparison on half the training set ($24{,}393$ adult and
$5{,}381$ child ALAE latents, 3 seeds, batch $32$, $10^4$ iterations). DMF
retains its advantage over DM: FID $10.37\pm0.24$ vs.\ $11.27\pm0.73$ and CMMD
$0.0083\pm0.0015$ vs.\ $0.0182\pm0.0047$. The friction benefit therefore does
not rely on the full training set, and the DMF FID at half data ($10.37$)
remains close to its full-data value (\cref{tab:ffhq}).

\subsubsection{Kernel-robustness ablation.}
Our main runs use the Laplace kernel
$k(x,y)=\exp(-\lVert x-y\rVert/\tau)$ of the released DM. To check that the
friction benefit is not specific to this kernel, we rerun the 2D task
($\mathcal{N}(\bm 0,\bm I)\to$ two Gaussians, 3 seeds; here at batch $16$ and
$10^4$ iterations, so absolute FDs are not directly comparable to
\cref{tab:toy}) with the kernel
switched to a Gaussian $\exp(-\lVert x-y\rVert^2/(2\tau^2))$---a smooth,
light-tailed kernel, in contrast to the heavy-tailed Laplace kernel that is
non-differentiable at the origin, so the two sit at opposite ends of the
smoothness--tail spectrum. Friction
improves the toy task under the Gaussian kernel too: the linear schedule
attains FD $0.007\pm0.003$ versus $0.148\pm0.170$ without friction, where
the large no-friction variance reflects occasional mode collapse on this
kernel. The with-friction result is moreover insensitive to
this kernel change in the toy setting: the Laplace kernel under the same linear
schedule gives a statistically indistinguishable FD $0.006\pm0.003$. The benefit
of friction is thus robust to the kernel choice on this task.

\section{Discussion}
\label{sec:discussion}

\subsubsection{Mechanism.}
By \cref{rem:fixed-point}, the frictionless iteration has a non-trivial stable
fixed point at $a^{\ast}=\tau\ln 2$: trajectories are attracted to this
force-balance radius rather than to the target $y^+=0$. The schedule
$\gamma(i)\to1$ shrinks the per-step update to zero and halts the iteration at
$a^{T}<a^{\ast}$ before it relaxes to the fixed point
(\cref{fig:dynamics}c). DMF thus \emph{avoids} the surrogate's pathological
attractor by not completing the approach to it. The mechanism is structural:
friction is a single scalar inside the drift target, so it anneals the
attraction--repulsion field itself rather than the optimiser step, and the
regime it damps is exactly the one analysed in \cref{prop:cumulative-bound},
with no optimiser or schedule coupling.

\subsubsection{What the analysis gives.}
The friction method rests on the surrogate analysis
(\cref{prop:cumulative-bound} and \cref{rem:fixed-point}), which is
\emph{kernel-independent} at the surrogate level under a bounded instability
margin: it bounds the scalar two-particle dynamics for any such kernel, and
therefore underwrites the Laplace-kernel experiments directly. The cumulative bound is a finite-horizon
statement; it does not expand the per-step contraction region, imply
uniform boundedness as $T\to\infty$, nor combine into a quantitative
convergence theorem.

\subsubsection{Limitations and future work.}
The two-particle surrogate is a deliberate simplification (the full dynamics
involve many particles, a neural drift, and the dropped normalisers
$Z_p,Z_q$). Our kernel ablation shows the friction benefit transfers from the
Laplace kernel of the released DM to a Gaussian kernel on the toy task, but a
Gaussian-kernel FFHQ study is left to future work, as are quantitative
convergence rates, learned state- or error-adaptive schedules $\gamma(i,x)$, momentum-based variants (a heavy-ball form is analysed in the Appendix), 
and scaling beyond FFHQ.

\subsubsection{Broader impact.}
DMF inherits the capabilities and risks of high-fidelity face generators.
Standard misuse risks apply to age/attribute translation (non-consensual
edits, identity manipulation). We do not release trained checkpoints; released
code is intended only for methodological reproduction on public benchmarks.

\subsubsection*{Disclosure of Interests}
The authors have no competing interests to declare that are relevant to the
content of this article.

\appendix
\section*{Appendix}
\subsection*{Second-Order DMF: Stability and Cumulative Bound}

The main text scales the drift \emph{force}~\eqref{eq:friction-field}. Here we
analyse a velocity-damping alternative that scales the \emph{velocity}
directly, equivalent to a heavy-ball iteration with time-varying momentum
$1-\gamma(i)$. It does not enlarge the unstable regime and its cumulative
bound is strictly looser than \cref{prop:cumulative-bound}; we include it for
completeness and as a starting point for momentum-based variants.

\begin{definition}[Second-order DMF]
\label{def:second-order}
Given $\gamma:\{0,\dots,T-1\}\to[0,1]$ and initial velocity $v^0=0$, the
update is $v^{i+1}=(1-\gamma(i))v^{i}+V(x^{i})$, $x^{i+1}=x^{i}+v^{i+1}$.
\end{definition}

\begin{proposition}[Heavy-ball form]
\label{prop:heavy-ball}
For \cref{def:second-order} with $v^0=0$, for every $i\ge1$,
$x^{i+1}=x^{i}+V(x^{i})+(1-\gamma(i))(x^{i}-x^{i-1})$.
\end{proposition}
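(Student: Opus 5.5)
The plan is to eliminate the velocity variable $v$ from the two-line recursion of \cref{def:second-order}, using the position update to express each velocity as a difference of consecutive positions. The only delicacy is the indexing of the damping coefficient, so I will track which step's $\gamma$ multiplies which velocity.

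First, from $x^{i+1}=x^{i}+v^{i+1}$, valid for every $i\ge0$, I would record the identity
\[
v^{j}=x^{j}-x^{j-1}\quad\text{for every } j\ge1 .
\]
This needs nothing beyond the definition, since each $v^{j}$ with $j\ge1$ is produced by exactly one position update.

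Second, fix $i\ge1$. I would substitute the velocity recursion into the position update:
\[
x^{i+1}=x^{i}+v^{i+1}=x^{i}+(1-\gamma(i))\,v^{i}+V(x^{i}).
\]
Because $i\ge1$, the identity from the first step applies with $j=i$, giving $v^{i}=x^{i}-x^{i-1}$. Substituting this yields
\[
x^{i+1}=x^{i}+V(x^{i})+(1-\gamma(i))(x^{i}-x^{i-1}),
\]
which is the claimed heavy-ball form.

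The main obstacle is just bookkeeping. The restriction $i\ge1$ is exactly what makes $x^{i-1}$ defined and $v^{i}$ expressible as a position difference. The hypothesis $v^0=0$ is not needed for $i\ge1$. It only fixes the base step $x^{1}=x^{0}+V(x^{0})$, which coincides with the heavy-ball formula under the convention $x^{-1}=x^{0}$; I would note this in a remark for completeness.

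No induction or bound on $\gamma$ is needed: the derivation is a single substitution valid for every $i\ge1$. The momentum coefficient attached to the displacement $x^{i}-x^{i-1}$ is $1-\gamma(i)$, the damping applied at step $i$ rather than at step $i-1$, which matches the statement.
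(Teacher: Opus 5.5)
Your proof is correct and is essentially the paper's argument: read $v^{i}=x^{i}-x^{i-1}$ off the position update for $i\ge1$, then substitute it into $v^{i+1}=(1-\gamma(i))v^{i}+V(x^{i})$. Your side remark is also accurate: $v^0=0$ is not needed for $i\ge1$, and it only makes the base step $x^{1}=x^{0}+V(x^{0})$ fit the formula under the convention $x^{-1}=x^{0}$.
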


\begin{proof}
From the update $v^{i}=x^{i}-x^{i-1}$ for $i\ge1$; substituting into
$v^{i+1}=(1-\gamma(i))v^{i}+V(x^{i})$ and rearranging gives the identity.
\end{proof}

This is the Polyak heavy-ball method~\citep{polyak1964heavy} applied to $V$
with momentum $\beta(i):=1-\gamma(i)$. Linearising near $y^+=0$ as in
\cref{sec:revisit}, $V(x)=\eta x$ with $\eta=2k_d-k_t$; for fixed $\beta$ the
state $s^{i}=(x^{i},v^{i})^{\top}$ obeys $s^{i+1}=M(\beta)s^{i}$ with
$M(\beta)=\left(\begin{smallmatrix}1+\eta & \beta\\ \eta & \beta\end{smallmatrix}\right)$.

\begin{proposition}[Spectrum of $M(\beta)$]
\label{prop:spectrum}
$\det M(\beta)=\beta$, $\operatorname{tr}M(\beta)=1+\eta+\beta$, and
$\rho(M(\beta))<1$ iff (i)~$\beta<1$, (ii)~$2+\eta+2\beta>0$, and
(iii)~$-\eta>0$. For $\eta\ge0$ no $\beta\in[0,1)$ makes $M(\beta)$ stable:
adding momentum does not enlarge the per-step stability region.
\end{proposition}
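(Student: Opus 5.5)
The plan is to read everything off the characteristic polynomial of the $2\times 2$ matrix $M(\beta)$, using the standard Schur--Cohn (Jury) criterion for real quadratics.

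\textbf{Determinant and trace.} These are direct: $\det M(\beta)=(1+\eta)\beta-\beta\eta=\beta$ and $\operatorname{tr}M(\beta)=1+\eta+\beta$. Hence the characteristic polynomial is
\[
  p(z)=z^2-(1+\eta+\beta)\,z+\beta .
\]
We take $\beta\in[0,1]$, since $\beta=1-\gamma(i)$ with $\gamma(i)\in[0,1]$.

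\textbf{Stability conditions.} For a monic real quadratic $z^2+a_1z+a_0$, both roots lie in the open unit disk iff
\[
  \lvert a_0\rvert<1,\qquad p(1)>0,\qquad p(-1)>0 .
\]
I will prove this directly rather than cite it, splitting into two cases.

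\emph{Complex-conjugate roots.} The roots satisfy $\lvert z\rvert^2=a_0$, so stability is exactly $a_0<1$. In this case $p$ has no real zeros, so $p(\pm1)>0$ holds automatically.

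\emph{Real roots $z_1\le z_2$.} Both lie in $(-1,1)$ iff $p(1)>0$, $p(-1)>0$ and the vertex $-a_1/2$ lies in $(-1,1)$. The vertex condition follows from the other three: $z_1z_2=a_0<1$ together with $p(\pm1)>0$ rules out both roots lying on the same side beyond $\pm1$. This is the step I expect to need the most care.

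Substituting into $p$ gives
\[
  p(1)=1-(1+\eta+\beta)+\beta=-\eta,\qquad
  p(-1)=2+\eta+2\beta .
\]
Since $a_0=\beta\ge 0$, the condition $\lvert a_0\rvert<1$ becomes $\beta<1$. These are exactly conditions (i)--(iii), and the equivalence $\rho(M(\beta))<1\iff$ (i)--(iii) follows.

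\textbf{Consequence for $\eta\ge0$.} If $\eta\ge0$, then $p(1)=-\eta\le0$, so (iii) fails for every $\beta$. Concretely, $p(1)\le 0$ and $p(z)\to+\infty$ force a real root $z\ge1$, so $\rho(M(\beta))\ge1$. Hence no $\beta\in[0,1)$ stabilises $M(\beta)$.

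Comparing with the first-order surrogate, whose multiplier is $1+\eta$ and which contracts only when $\eta<0$ by \eqref{eq:stability-no-friction}, the momentum variant has the same necessary condition $-\eta>0$. So momentum does not enlarge the per-step stability region.

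\textbf{Main obstacle.} The only delicate point is the real-root case of the Jury criterion, namely confirming that the three inequalities exclude both roots lying beyond the same endpoint $\pm1$. I will handle it with the sign argument above: if both roots exceeded $1$ or both were below $-1$, then $z_1z_2=a_0>1$, which contradicts $a_0<1$.
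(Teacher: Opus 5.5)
Your proposal is correct and follows the paper's proof. You compute $\det M(\beta)=\beta$ and $\operatorname{tr}M(\beta)=1+\eta+\beta$, then apply the Jury/Schur--Cohn criterion to $z^2-(1+\eta+\beta)z+\beta$, where $p(1)=-\eta$ and $p(-1)=2+\eta+2\beta$ give (iii) and (ii). The only differences are that you prove the quadratic criterion inline where the paper cites Elaydi, and that you add the direct observation that $p(1)\le 0$ forces a real root $z\ge 1$ when $\eta\ge0$; your real-root case argument via $z_1z_2=a_0<1$ is sound.
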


\begin{proof}
The determinant is $(1+\eta)\beta-\eta\beta=\beta$; trace and characteristic
polynomial $\lambda^2-(1+\eta+\beta)\lambda+\beta$ follow. Conditions
(i)--(iii) are the Jury/Schur--Cohn criterion for both roots inside the unit
disc (Elaydi~\cite{elaydi2005introduction},
\S4.3). For $\eta\ge0$, (iii) fails.
\end{proof}

\begin{proposition}[Cumulative bound, second order]
\label{prop:second-order-bound}
If $\eta^{i}=2k_d(a^i)-k_t(a^i)\in[0,\eta_{\max}]$ for all $i$, then under
$\gamma(i)=i/(T-1)$ the trajectory from $(x^0,0)$ satisfies
$\max(\lvert x^{T}\rvert,\lvert v^{T}\rvert)\le\lvert x^0\rvert\,
e^{(2\eta_{\max}+1)T/2}$.
\end{proposition}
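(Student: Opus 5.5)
We track the max-norm $m^i\coloneqq\max(\lvert x^i\rvert,\lvert v^i\rvert)$ of the state $s^i=(x^i,v^i)^\top$ and show that it grows by at most a factor $1+2\eta_{\max}+\beta(i)$ per step, where $\beta(i)=1-\gamma(i)$. This is the natural second-order analogue of the telescoping argument in the proof of \cref{prop:cumulative-bound}. Working in the linearisation of \cref{sec:revisit}, we write $V(x^i)=\eta^i x^i$ with the step-dependent margin $\eta^i\in[0,\eta_{\max}]$. The update of \cref{def:second-order} then reads $s^{i+1}=M_i s^i$, with $M_i=\left(\begin{smallmatrix}1+\eta^i & \beta(i)\\ \eta^i & \beta(i)\end{smallmatrix}\right)$ as in \cref{prop:spectrum}, but now with both $\eta$ and $\beta$ varying along the trajectory.

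\textbf{One-step bound.} Apply the triangle inequality row by row, using $\eta^i\ge0$ and $\beta(i)\ge0$:
\[
\lvert x^{i+1}\rvert\le(1+\eta^i)\lvert x^i\rvert+\beta(i)\lvert v^i\rvert\le(1+\eta_{\max}+\beta(i))\,m^i,
\]
\[
\lvert v^{i+1}\rvert\le\eta^i\lvert x^i\rvert+\beta(i)\lvert v^i\rvert\le(\eta_{\max}+\beta(i))\,m^i.
\]
Hence $m^{i+1}\le(1+\eta_{\max}+\beta(i))\,m^i$, which is the induced $\infty$-norm bound $\lVert M_i\rVert_\infty\le 1+\eta_{\max}+\beta(i)$. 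This is already sharper than the factor we need, since $1+\eta_{\max}+\beta(i)\le 1+2\eta_{\max}+\beta(i)$.

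\textbf{Telescoping.} Starting from $m^0=\lvert x^0\rvert$ (because $v^0=0$), we get
\[
m^T\le\lvert x^0\rvert\prod_{i=0}^{T-1}\bigl(1+2\eta_{\max}+\beta(i)\bigr).
\]
Using $1+y\le e^{y}$, this is at most $\lvert x^0\rvert\exp\bigl(2\eta_{\max}T+\sum_i\beta(i)\bigr)$. For the linear schedule, $\sum_{i=0}^{T-1}\beta(i)=T/2$, as in \cref{prop:cumulative-bound}, so the bound becomes $\lvert x^0\rvert e^{2\eta_{\max}T+T/2}$.

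\textbf{Main obstacle.} The stated exponent is $(2\eta_{\max}+1)T/2=\eta_{\max}T+T/2$, which is smaller than the $2\eta_{\max}T+T/2$ the crude $\infty$-norm estimate gives. The factor on the $\eta$-part must therefore be halved. The first attempt is to keep the sharper per-step factor $1+\eta_{\max}+\beta(i)$ from the one-step bound. That gives the exponent $\eta_{\max}T+T/2$, exactly the claimed $(2\eta_{\max}+1)T/2$, so the bound follows without further work.

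If the linearisation is not accepted and $V$ is taken as the nonlinear surrogate field, the same argument goes through verbatim with $V(x^i)=\eta^i x^i$ read as the defining relation for $\eta^i$. This is the convention the hypothesis implicitly uses, since $\eta^i$ is defined through $a^i=\lvert x^i\rvert$. The only point requiring care is the step $\lvert V(x^i)\rvert\le\eta^i\lvert x^i\rvert$ with $\eta^i\ge0$, which is exactly what the hypothesis $\eta^i\in[0,\eta_{\max}]$ provides.
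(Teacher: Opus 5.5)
Your proof is correct and is essentially the paper's own argument: your row-wise triangle inequality is the bound $\lVert M_i\rVert_\infty\le 1+\eta_{\max}+\beta(i)$, and the rest is the same telescoping from $v^0=0$, $1+y\le e^{y}$, and $\sum_i\beta(i)=T/2$. The detour through the looser factor $1+2\eta_{\max}+\beta(i)$ and the ``main obstacle'' paragraph are unnecessary, since your one-step bound already gives the stated exponent $(2\eta_{\max}+1)T/2$ directly; you should delete them.
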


\begin{proof}
With $\beta(i)=1-i/(T-1)$ and $\eta^i\ge0$, all entries of $M_i:=M(\beta(i))$
are non-negative and $\lVert M_i\rVert_\infty=1+\eta^i+\beta(i)\le
1+\eta_{\max}+\beta(i)$. By sub-multiplicativity and $v^0=0$,
$\lVert s^{T}\rVert_\infty\le\lvert x^0\rvert\prod_i(1+\eta_{\max}+\beta(i))$;
taking logs with $\ln(1+x)\le x$ gives exponent
$\eta_{\max}T+\sum_i\beta(i)=\eta_{\max}T+T/2$.
\end{proof}

\Cref{prop:second-order-bound} exceeds the first-order exponent
$\eta_{\max}T/2$ of \cref{prop:cumulative-bound} by $(\eta_{\max}+1)T/2$:
momentum both accumulates ($T/2$) and amplifies the instability over time
($\eta_{\max}T/2$). The bounded-trajectory guarantee is therefore strictly
\emph{weaker} in this non-stationary analysis, though empirically momentum can
still reduce variance on stochastic drift estimates. As $\gamma\to1$,
$\beta\to0$ and the second-order update reduces to the baseline DM step, in
contrast to first-order DMF, which freezes the trajectory; the two variants
thus implement different late-training regularisations, and a hybrid
$v^{i+1}=\beta(i)v^{i}+(1-\gamma(i))V(x^{i})$ recovers freezing at $\gamma\to1$.

\bibliographystyle{unsrt}
\bibliography{refs}

\end{document}